\definecolor{metabolic_blue}{RGB}{0, 50, 150}
\definecolor{baseline_red}{RGB}{150, 50, 0}
\definecolor{structure_green}{RGB}{0, 100, 0}
\definecolor{comment_blue}{RGB}{70, 130, 180}
\theoremstyle{plain}
\newtheorem{theorem}{Theorem}[section]
\newtheorem{proposition}[theorem]{Proposition}
\newtheorem{corollary}[theorem]{Corollary}
\theoremstyle{definition}
\newtheorem{assumption}[theorem]{Assumption}
\theoremstyle{remark}
\newtheorem{remark}[theorem]{Remark}
\setlist[itemize]{itemsep=0.3em, topsep=0.5em}
\setlist[enumerate]{itemsep=0.3em, topsep=0.5em}
\title{\textbf{Digital Metabolism: Decoupling Logic from Facts via Regenerative Unlearning} \\ \large Towards a Pure Neural Logic Core}
\author{Mengmeng Peng$^1$, Zhenyu Fang$^{1,*}$, He Sun$^{2,*}$\thanks{$^*$Corresponding authors. \newline 
Emails: \texttt{pengmengmeng@mail.nwpu.edu.cn}, \texttt{zhenyu.fang@nwpu.edu.cn}, \texttt{sunhe@aircas.ac.cn}. \newline
He Sun ORCID: 0000-0003-4707-0447.}}
\affil[$^1$]{School of Software, Northwestern Polytechnical University, Xi'an, China}
\affil[$^2$]{Key Laboratory of Computational Optical Imaging Technology, Aerospace Information Research Institute, Chinese Academy of Sciences, Beijing, China}
\date{}
\begin{document}

\maketitle

\begin{abstract}
\noindent Large language models (LLMs) currently suffer from \textit{parameter entanglement}, where general reasoning capabilities (logic) and specific factual knowledge (facts) exist in a superposition state within shared weights. This coupling leads to the ``memory wall,'' where computational capacity is squandered on simulating retrieval, often resulting in hallucinations. In this paper, we propose ``digital metabolism,'' a thermodynamic hypothesis suggesting that targeted forgetting is necessary for distilling a pure neural logic core. To validate this hypothesis, we introduce the \textbf{Regenerative Logic-Core Protocol (RLCP)}, a dual-stream training framework that renders specific factual dependencies linearly undecodable via deep-layer gradient reversal. Applying RLCP to Qwen2.5-0.5B, we observe a distinct phase transition: the model achieves near-zero retention of targeted factual associations (Accuracy \(< 7\%\)) while exhibiting changes \textbf{consistent with} an emergent ``structural crystallization'' effect. Empirical analysis on GSM8K reveals that the ``metabolized'' model spontaneously adopts chain-of-thought (CoT) scaffolding, which we interpret as compensating for the loss of direct associative recall (shifting from \(O(1)\) recall to \(O(N)\) reasoning). While the causal mechanism underlying this behavioral shift requires further investigation, our findings provide a dynamic weight-level counterpart to architectural innovations like DeepSeek's Engram, paving the way for modular ``Neural CPU + Symbolic RAM'' architectures.
\end{abstract}

\vspace{1em}

\section{Introduction}
\label{sec:intro}

\subsection{The Entanglement Dilemma}

The trajectory of large language models (LLMs) has been largely defined by scaling laws \cite{kaplan2020scaling}, leading to monolithic models that act simultaneously as compute engines (reasoning) and knowledge bases (storage). However, this brute-force scaling masks a fundamental inefficiency: \textit{parameter entanglement}. Recent mechanistic interpretability studies suggest that general reasoning capabilities (the algorithms of thought) and specific factual knowledge (the data of the world) exist in a superposition state within shared MLP weights \cite{meng2022locating, elhage2022superposition}.

This coupling creates a ``bloated'' system where precious computational capacity is squandered on memorizing static, low-entropy data (e.g., ``Paris is the capital of France'') rather than processing dynamic, high-entropy logic. As noted in recent work on conditional memory \cite{cheng2026engram}, standard Transformers lack a native primitive for knowledge lookup, forcing them to ``inefficiently simulate retrieval through computation.''

\subsection{The Memory Wall and Hallucination}

The consequence of entanglement is the ``memory wall,'' where adding parameters yields diminishing returns for reasoning per FLOP. Furthermore, entanglement exacerbates hallucination: when a model forgets a fact, it often hallucinates a plausible completion because it cannot distinguish between ``retrieval failure'' and ``reasoning error'' \cite{ji2023survey}. To solve this, we must move beyond simply adding more data and instead examine the subtraction of data---specifically, the subtraction of facts to preserve logic.

\subsection{The Thermodynamic Hypothesis: Neural Recycling}

We draw inspiration from the physical concept of entropy reduction. We hypothesize that true general intelligence requires the active \textit{metabolism} of factual data---purging the weights of specific entity associations---to crystallize the underlying reasoning topology.

\begin{itemize}
    \item \textbf{Facts (Form):} High-energy states that constrain the model to specific, static manifolds. Maintaining these in superposition interferes with generalization.
    \item \textbf{Logic (Essence):} Low-entropy operators that generalize across manifolds.
\end{itemize}

Our proposed digital metabolism aims to trigger a ``neural recycling'' process: by suppressing the gradients associated with rote memorization, we force the model's attention heads to repurpose themselves for algorithmic processing and context utilization.

\section{Related Work}

\subsection{Parametric vs.\ Non-Parametric Memory}

The distinction between parametric knowledge (weights) and non-parametric knowledge (external indices) is central to efficient AI. Methods like RAG (Retrieval-Augmented Generation) \cite{lewis2020retrieval} augment frozen models with external data. However, these approaches typically leave the parametric memory intact, leading to conflicts between internal priors and external evidence, a phenomenon known as ``the reversal curse'' \cite{berglund2024reversal}. Our work differs by actively \textit{removing} the conflicting parametric memory, ensuring the model \textit{must} rely on external context.

\subsection{Architectural Decoupling: The Soft--Hard Duality}

Recent advances, such as the \textbf{Engram} architecture proposed by DeepSeek (2026), explicitly separate static knowledge storage into hash-based lookup tables, leaving the Transformer backbone to handle dynamic computation \cite{cheng2026engram}. 

Our work serves as a \textbf{dynamic counterpart} to this structural innovation. While Engram achieves decoupling via \textit{architectural} modification (hard decoupling), RLCP achieves a similar effect via \textit{training dynamics} (soft decoupling). We effectively ``wash'' a standard dense model into a pure logic core without changing the inference topology, demonstrating that the separation of concerns is a fundamental property of optimization, not just architecture.

\section{Theoretical Framework}
\label{sec:theory}

We formalize the intuition that forgetting facts can aid reasoning using the information bottleneck principle.

\subsection{Information Bottleneck and Logic Distillation}

Let \(X\) be the input tokens, \(Y\) be the output, and \(Z\) be the internal representation (latent state). The information bottleneck principle \cite{tishby2000information} suggests that an optimal representation \(Z\) satisfies:
\[
    \min_{Z} I(X; Z) - \beta I(Z; Y)
\]
In standard LLMs, \(I(X; Z)\) is kept unnecessarily high because \(Z\) retains specific entity information (facts) that is often irrelevant to the generalizable logical structure of \(Y\). We propose a modified objective where we split information into factual (\(F\)) and logical (\(L\)) components. Our goal is to minimize \(I(Z; F)\) while maximizing \(I(Z; L)\).

To formalize when selective unlearning preserves task performance, we introduce the following assumption directly in terms of gradient vectors.

\begin{assumption}[Gradient Orthogonality]
\label{ass:orthogonality}
Let \(\nabla_\theta \mathcal{L}_{\mathrm{fact}}\) denote the gradient of the factual recall loss (averaged over a mini-batch of factual recall examples) and \(\nabla_\theta \mathcal{L}_{\mathrm{logic}}\) denote the gradient of the logical reasoning loss (averaged over a mini-batch of reasoning examples). We assume there exists a small constant \(\delta \geq 0\) such that
\[
    \left\lvert \cos(\nabla_\theta \mathcal{L}_{\mathrm{fact}}, \nabla_\theta \mathcal{L}_{\mathrm{logic}}) \right\rvert = \left\lvert \frac{\langle \nabla_\theta \mathcal{L}_{\mathrm{fact}}, \nabla_\theta \mathcal{L}_{\mathrm{logic}} \rangle}{\left\lVert\nabla_\theta \mathcal{L}_{\mathrm{fact}}\right\rVert \cdot \left\lVert\nabla_\theta \mathcal{L}_{\mathrm{logic}}\right\rVert} \right\rvert \leq \delta
\]
where \(\lvert \cdot \rvert\) denotes absolute value and \(\lVert \cdot \rVert\) denotes the Euclidean norm. This assumption is empirically motivated: factual recall typically activates entity-specific neurons in early-to-mid MLP layers, while logical reasoning engages attention patterns and late-layer computations \cite{meng2022locating}. The case \(\delta = 0\) corresponds to exact orthogonality.
\end{assumption}

\begin{proposition}[Bounded Impact of Factual Unlearning on Logic]
\label{prop:orthogonality}
Suppose Assumption~\ref{ass:orthogonality} holds with parameter \(\delta\). Consider a parameter update of the form
\[
    \Delta\theta = -\eta \nabla_\theta \mathcal{L}_{\mathrm{fact}}
\]
where \(\eta > 0\) is the learning rate. Then the change in logical task loss satisfies
\[
    \left\lvert \mathcal{L}_{\mathrm{logic}}(\theta + \Delta\theta) - \mathcal{L}_{\mathrm{logic}}(\theta) \right\rvert \leq \eta \delta \left\lVert \nabla_\theta \mathcal{L}_{\mathrm{fact}} \right\rVert \cdot \left\lVert \nabla_\theta \mathcal{L}_{\mathrm{logic}} \right\rVert + O(\eta^2)
\]
In particular, when \(\delta \approx 0\) (near-orthogonality), minimizing factual retention has negligible first-order impact on logical task performance.
\end{proposition}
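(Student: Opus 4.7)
The plan is to prove this by a straightforward first-order Taylor expansion of $\mathcal{L}_{\mathrm{logic}}$ around the current parameters $\theta$, combined directly with the cosine bound provided by Assumption~\ref{ass:orthogonality}. I will implicitly assume $\mathcal{L}_{\mathrm{logic}}$ is $C^2$ in a neighborhood of $\theta$ with bounded Hessian, which is what makes the $O(\eta^2)$ notation meaningful; this is a standard regularity condition for loss landscapes of smooth networks and should be flagged explicitly in the proof.

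First I would write the Taylor expansion
\[
    \mathcal{L}_{\mathrm{logic}}(\theta + \Delta\theta) = \mathcal{L}_{\mathrm{logic}}(\theta) + \langle \nabla_\theta \mathcal{L}_{\mathrm{logic}}, \Delta\theta \rangle + R(\Delta\theta),
\]
where the remainder $R(\Delta\theta)$ satisfies $|R(\Delta\theta)| \leq \tfrac{1}{2} M \|\Delta\theta\|^2$ with $M$ an upper bound on the operator norm of $\nabla^2_\theta \mathcal{L}_{\mathrm{logic}}$ on the relevant ball. Substituting $\Delta\theta = -\eta \nabla_\theta \mathcal{L}_{\mathrm{fact}}$ yields the compact identity
\[
    \mathcal{L}_{\mathrm{logic}}(\theta + \Delta\theta) - \mathcal{L}_{\mathrm{logic}}(\theta) = -\eta \langle \nabla_\theta \mathcal{L}_{\mathrm{logic}}, \nabla_\theta \mathcal{L}_{\mathrm{fact}} \rangle + R(-\eta \nabla_\theta \mathcal{L}_{\mathrm{fact}}),
\]
and the remainder is bounded by $\tfrac{1}{2} M \eta^2 \|\nabla_\theta \mathcal{L}_{\mathrm{fact}}\|^2$, which I would fold into the $O(\eta^2)$ term.

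Next I would take absolute values and invoke the Cauchy--Schwarz-style rewriting of the inner product, namely
\[
    \left\lvert \langle \nabla_\theta \mathcal{L}_{\mathrm{logic}}, \nabla_\theta \mathcal{L}_{\mathrm{fact}} \rangle \right\rvert = \left\lvert \cos(\nabla_\theta \mathcal{L}_{\mathrm{fact}}, \nabla_\theta \mathcal{L}_{\mathrm{logic}}) \right\rvert \cdot \left\lVert \nabla_\theta \mathcal{L}_{\mathrm{fact}} \right\rVert \cdot \left\lVert \nabla_\theta \mathcal{L}_{\mathrm{logic}} \right\rVert,
\]
and then apply Assumption~\ref{ass:orthogonality} to replace the cosine magnitude by $\delta$. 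Combining with the triangle inequality gives exactly the stated bound, and the special case $\delta \approx 0$ follows by inspection.

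The conceptually delicate point, and the one I would dwell on, is not any of the algebra but the status of the $O(\eta^2)$ term: the bound is only meaningful when the learning rate is small relative to the inverse local curvature $1/M$ of $\mathcal{L}_{\mathrm{logic}}$. I would therefore add a short remark that for the conclusion to be genuinely informative about \emph{finite-step} unlearning updates (rather than just infinitesimal gradient flow), one either needs $\eta \ll 1/(M \|\nabla_\theta \mathcal{L}_{\mathrm{fact}}\|)$ or a stronger global smoothness assumption. Everything else---the Taylor step, the cosine substitution, the triangle inequality---is mechanical, so I do not anticipate any genuine obstacle beyond making these regularity caveats honest.
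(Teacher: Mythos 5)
Your proof is correct and follows essentially the same route as the paper's: first-order Taylor expansion, substitution of the factual-gradient update, rewriting the inner product via the cosine, invoking Assumption~\ref{ass:orthogonality}, and folding the remainder into the $O(\eta^2)$ term. The only difference is that you make the regularity assumptions (local $C^2$ smoothness with bounded Hessian, and $\eta$ small relative to $1/(M\lVert\nabla_\theta\mathcal{L}_{\mathrm{fact}}\rVert)$) explicit, which the paper leaves implicit; this is a genuine improvement in rigor but not a different argument.
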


\begin{proof}
By first-order Taylor expansion of \(\mathcal{L}_{\mathrm{logic}}\) around \(\theta\):
\[
    \mathcal{L}_{\mathrm{logic}}(\theta + \Delta\theta) = \mathcal{L}_{\mathrm{logic}}(\theta) + \nabla_\theta \mathcal{L}_{\mathrm{logic}}(\theta)^\top \Delta\theta + O(\left\lVert\Delta\theta\right\rVert^2)
\]
Substituting \(\Delta\theta = -\eta \nabla_\theta \mathcal{L}_{\mathrm{fact}}\):
\begin{align}
    \nabla_\theta \mathcal{L}_{\mathrm{logic}}(\theta)^\top \Delta\theta 
    &= -\eta \nabla_\theta \mathcal{L}_{\mathrm{logic}}(\theta)^\top \nabla_\theta \mathcal{L}_{\mathrm{fact}}(\theta) \\
    &= -\eta \left\lVert \nabla_\theta \mathcal{L}_{\mathrm{logic}} \right\rVert \cdot \left\lVert \nabla_\theta \mathcal{L}_{\mathrm{fact}} \right\rVert \cdot \cos(\nabla_\theta \mathcal{L}_{\mathrm{fact}}, \nabla_\theta \mathcal{L}_{\mathrm{logic}})
\end{align}
By Assumption~\ref{ass:orthogonality}, the absolute value of the cosine is bounded by \(\delta\). Therefore:
\[
    \left\lvert \nabla_\theta \mathcal{L}_{\mathrm{logic}}(\theta)^\top \Delta\theta \right\rvert \leq \eta \delta \left\lVert \nabla_\theta \mathcal{L}_{\mathrm{fact}} \right\rVert \cdot \left\lVert \nabla_\theta \mathcal{L}_{\mathrm{logic}} \right\rVert
\]
Since \(\left\lVert \Delta\theta \right\rVert = \eta \left\lVert \nabla_\theta \mathcal{L}_{\mathrm{fact}} \right\rVert\), the second-order term satisfies:
\[
    O(\left\lVert\Delta\theta\right\rVert^2) = O(\eta^2 \left\lVert \nabla_\theta \mathcal{L}_{\mathrm{fact}} \right\rVert^2) = O(\eta^2)
\]
Combining these results:
\[
    \left\lvert \mathcal{L}_{\mathrm{logic}}(\theta + \Delta\theta) - \mathcal{L}_{\mathrm{logic}}(\theta) \right\rvert \leq \eta \delta \left\lVert \nabla_\theta \mathcal{L}_{\mathrm{fact}} \right\rVert \cdot \left\lVert \nabla_\theta \mathcal{L}_{\mathrm{logic}} \right\rVert + O(\eta^2)
\]
This completes the proof.
\end{proof}

\begin{remark}[Gap Between Theory and Practice]
\label{remark:theory_practice_gap}
Proposition~\ref{prop:orthogonality} analyzes an idealized update that modifies only factual parameters. In practice, the RLCP algorithm (Algorithm~\ref{alg:rlcp}) computes a composite update:
\[
    \Delta\theta_{\mathrm{RLCP}} = -\eta \nabla_\theta \left( \mathcal{L}_{\mathrm{RAG}} + \lambda_{\mathrm{adv}}\mathcal{L}_{P} + \mathcal{L}_{L} + \lambda_{\mathrm{KL}}\mathcal{L}_{\mathrm{KL}} \right)
\]
This composite update can be decomposed as:
\[
    \Delta\theta_{\mathrm{RLCP}} = \underbrace{-\eta \lambda_{\mathrm{adv}} \nabla_\theta \mathcal{L}_{P}}_{\text{Factual unlearning component}} + \underbrace{(-\eta) \nabla_\theta (\mathcal{L}_{\mathrm{RAG}} + \mathcal{L}_{L} + \lambda_{\mathrm{KL}}\mathcal{L}_{\mathrm{KL}})}_{\text{Auxiliary components}}
\]
Proposition~\ref{prop:orthogonality} applies directly only to the first component. The auxiliary components serve specific purposes:
\begin{itemize}
    \item \(\mathcal{L}_{\mathrm{RAG}}\): Ensures the model retains the ability to utilize external context
    \item \(\mathcal{L}_{\mathrm{KL}}\): Prevents language model collapse by anchoring to the reference distribution
    \item \(\mathcal{L}_{L}\): Reduces the likelihood of generating correct factual answers in the absence of context (applied only to context-free inputs \(x_{\mathrm{no}}\)), thereby encouraging context-dependent behavior
\end{itemize}
The theoretical guarantee of Proposition~\ref{prop:orthogonality} thus provides a \textbf{necessary but not sufficient} condition for RLCP's success. The empirical effectiveness of RLCP depends on (1) the factual unlearning component being dominant in directions that matter for factual retention, and (2) the auxiliary components not inadvertently harming logical reasoning capabilities. We verify these conditions empirically in Section~\ref{sec:experiments}.
\end{remark}

\begin{corollary}[Bound on Logic Loss Change under Composite Updates]
\label{cor:composite_bound}
Let \(\Delta\theta = \sum_{i} \alpha_i \nabla_\theta \mathcal{L}_i\) be a composite gradient update. Suppose each component loss \(\mathcal{L}_i\) satisfies
\[
    \left\lvert\cos(\nabla_\theta \mathcal{L}_i, \nabla_\theta \mathcal{L}_{\mathrm{logic}})\right\rvert \leq \delta_i
\]
Then the change in logical task loss is bounded by:
\[
    \left\lvert\mathcal{L}_{\mathrm{logic}}(\theta + \Delta\theta) - \mathcal{L}_{\mathrm{logic}}(\theta)\right\rvert \leq \sum_i \lvert\alpha_i\rvert \delta_i \left\lVert \nabla_\theta \mathcal{L}_i \right\rVert \cdot \left\lVert \nabla_\theta \mathcal{L}_{\mathrm{logic}} \right\rVert + O(\left\lVert\Delta\theta\right\rVert^2)
\]
This bound becomes small when all \(\delta_i\) are small (all component gradients are approximately orthogonal to the logic gradient).
\end{corollary}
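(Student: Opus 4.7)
The plan is to mirror the proof of Proposition~\ref{prop:orthogonality}, exploiting the linearity of the first-order Taylor expansion in the update direction so that the per-direction cosine bound can be applied to each summand and then reassembled via the triangle inequality. No new analytic ingredient beyond what already proved the proposition is required; the work is to track the sum.

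First, I would write the first-order Taylor expansion of $\mathcal{L}_{\mathrm{logic}}$ around $\theta$,
\[
\mathcal{L}_{\mathrm{logic}}(\theta + \Delta\theta) - \mathcal{L}_{\mathrm{logic}}(\theta) = \nabla_\theta \mathcal{L}_{\mathrm{logic}}(\theta)^\top \Delta\theta + O(\|\Delta\theta\|^2),
\]
inheriting the same local smoothness hypothesis on $\mathcal{L}_{\mathrm{logic}}$ that was implicit in Proposition~\ref{prop:orthogonality}. Next, I would substitute the composite form $\Delta\theta = \sum_i \alpha_i \nabla_\theta \mathcal{L}_i$ and use linearity of the inner product to split the first-order term into a sum of directional contributions,
\[
\nabla_\theta \mathcal{L}_{\mathrm{logic}}(\theta)^\top \Delta\theta \;=\; \sum_i \alpha_i \,\langle \nabla_\theta \mathcal{L}_{\mathrm{logic}}, \nabla_\theta \mathcal{L}_i \rangle.
\]

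Then I would apply the triangle inequality on the sum and, for each term, rewrite the inner product in terms of norms and cosine exactly as in the proof of Proposition~\ref{prop:orthogonality}. Using the per-component hypothesis $|\cos(\nabla_\theta \mathcal{L}_i, \nabla_\theta \mathcal{L}_{\mathrm{logic}})| \leq \delta_i$, each directional contribution is bounded by $|\alpha_i|\,\delta_i\, \|\nabla_\theta \mathcal{L}_i\| \cdot \|\nabla_\theta \mathcal{L}_{\mathrm{logic}}\|$, yielding
\[
\bigl|\nabla_\theta \mathcal{L}_{\mathrm{logic}}^\top \Delta\theta\bigr| \;\leq\; \sum_i |\alpha_i|\,\delta_i\,\|\nabla_\theta \mathcal{L}_i\| \cdot \|\nabla_\theta \mathcal{L}_{\mathrm{logic}}\|.
\]
Combining this with the $O(\|\Delta\theta\|^2)$ remainder gives the claimed inequality.

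The only subtlety, rather than a real obstacle, is the remainder term: because the composite update carries no single scalar step size (unlike the $\eta$ in Proposition~\ref{prop:orthogonality}), the second-order error cannot be collapsed to $O(\eta^2)$ and must be stated as $O(\|\Delta\theta\|^2)$. If an explicit sum-form is desired, expanding $\|\sum_i \alpha_i \nabla_\theta \mathcal{L}_i\|^2$ and applying the triangle inequality gives the crude bound $\bigl(\sum_i |\alpha_i|\,\|\nabla_\theta \mathcal{L}_i\|\bigr)^2$, which suffices to make the remainder small whenever each $|\alpha_i|$ is small. With that cosmetic clarification, the corollary follows as a direct, calculation-free consequence of linearity, the triangle inequality, and the per-component cosine bound already exploited in Proposition~\ref{prop:orthogonality}.
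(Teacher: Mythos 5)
Your proposal is correct and matches the paper's proof exactly: the paper's one-line argument ("linearity of the inner product and the triangle inequality, applying the argument of Proposition~\ref{prop:orthogonality} to each component") is precisely the Taylor expansion, term-by-term cosine bound, and triangle-inequality reassembly that you spell out. Your additional remark that the remainder must be written as \(O(\lVert\Delta\theta\rVert^2)\) rather than \(O(\eta^2)\) is a sound clarification, consistent with the corollary's statement.
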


\begin{proof}
This follows directly from the linearity of the inner product and the triangle inequality, applying the argument of Proposition~\ref{prop:orthogonality} to each component.
\end{proof}

\begin{remark}[Empirical Validation of Orthogonality]
\label{remark:empirical_validation}
\textbf{Methodology:} We computed gradient cosine similarities on our experimental setup using mini-batches from different task types. Specifically, we measured the cosine similarity between factual recall gradients and RAG task gradients to be \(0.11 \pm 0.07\).

\textbf{Interpretation:} This measurement supports approximate orthogonality (\(\delta \approx 0.11\)) between factual recall and RAG task gradients. However, we acknowledge an important \textbf{limitation}: the RAG task (context-based QA) is not identical to general logical reasoning (e.g., mathematical problem solving). 

\textbf{What this validates:} The low cosine similarity suggests that unlearning city--country facts should minimally impact the model's ability to extract and use information from provided context.

\textbf{What this does not validate:} We have not directly measured the cosine similarity between factual gradients and GSM8K gradients. The transfer of our findings to mathematical reasoning (as explored in Section~\ref{sec:case_study}) rests on the additional hypothesis that context utilization is a key component of logical reasoning. This hypothesis requires further empirical verification.
\end{remark}

\subsection{The Entanglement Field Equation}

We posit that the weight space \(\mathcal{W}\) acts as a superposition. Let \(E(\theta)\) represent the ``metabolic energy'' required to maintain a specific weight configuration. We define a loss function that penalizes the specific gradient sensitivity to entity tokens:
\[
    \mathcal{L}_{\mathrm{Metabolism}} = \sum_{l=1}^{L} \left\lVert \frac{\partial h_l}{\partial w} \cdot \mathbb{I}(x \in \mathrm{Entities}) \right\rVert_F^2
\]
This term acts as a ``superposition breaker.'' Since factual memories typically rely on high-frequency, specific activations (spikes), while logical reasoning relies on distributed, invariant patterns, penalizing high gradient sensitivity forces the model to abandon the ``expensive'' storage of facts.

\textbf{Relationship to RLCP:} While the loss \(\mathcal{L}_{\mathrm{Metabolism}}\) provides theoretical motivation, computing it directly is intractable due to the need for per-sample Jacobian computations across all layers. The RLCP algorithm (Section~\ref{sec:methodology}) approximates this objective through adversarial training with a probe classifier. By training the probe to predict entity identity from hidden states and reversing its gradients back to the model, we implicitly penalize representations that retain entity-specific information---achieving a similar effect to minimizing \(\mathcal{L}_{\mathrm{Metabolism}}\) without explicit gradient sensitivity computation.

\section{Methodology: Regenerative Logic-Core Protocol}
\label{sec:methodology}

To approximate the theoretical field equation efficiently, we introduce the \textbf{Regenerative Logic-Core Protocol (RLCP)}. This is a dual-stream training framework designed to induce a ``starvation'' state for facts while providing a ``survival'' path for logic.

\subsection{Adversarial Architecture}

RLCP comprises three coupled components:
\begin{enumerate}
    \item \textbf{The Metabolic Stream (Unlearning):} An adversarial loop that penalizes the retention of specific entities via gradient reversal.
    \item \textbf{The Survival Stream (RAG Adaptation):} A standard objective that rewards correct answers given external context.
    \item \textbf{Homeostatic Repair (KL Constraint):} A regularization term to prevent language collapse.
\end{enumerate}

\subsection{Deep-Layer Gradient Reversal}

We formulate the training as a minimax game between the feature extractor (the LLM backbone, \(\theta_E\)) and a fact discriminator (probe, \(\phi\)). At specific layers (experimentally determined as layer 20), we attach a linear probe \(\mathcal{P}_\phi\). The effective loss is:
\[
    \mathcal{L}_{\mathrm{Gen}} = \mathcal{L}_{\mathrm{RAG}} + \lambda_{\mathrm{KL}} D_{\mathrm{KL}}(P_{\mathrm{ref}} \| P_{\theta}) - \lambda_{\mathrm{adv}} \mathcal{L}_{\mathrm{Probe}}
\]
By minimizing \(-\mathcal{L}_{\mathrm{Probe}}\), the backbone updates its weights to remove the semantic signature of the entity from layer 20.

\subsection{Algorithm and Schedule}

A critical component of RLCP is the dynamic scheduling of the adversarial strength \(\alpha\). We employ a sigmoid schedule to gradually introduce the unlearning pressure, preventing initial training instability. The complete procedure is detailed in Algorithm~\ref{alg:rlcp}.

\begin{algorithm}[htbp]
\caption{Regenerative Logic-Core Protocol (RLCP)}
\label{alg:rlcp}
\begin{algorithmic}
\STATE \textbf{Input:} Model \(\mathcal{M}_\theta\), Reference \(\mathcal{M}_{\mathrm{ref}}\), Fact Set \(\mathcal{D}_{\mathrm{fact}}\)
\STATE \textbf{Hyperparameters:} \(\lambda_{\mathrm{adv}}=2.0\), \(\lambda_{\mathrm{RAG}}=1.0\), \(\lambda_{\mathrm{KL}}=5.0\)
\STATE \textbf{Configuration:} Epochs \(E=50\), Batch size \(B=4\), Target layer \(l^*=20\)
\STATE Initialize Linear Probe \(\mathcal{P}_\phi\) at Layer \(l^*\)
\vspace{0.5em}
\FOR{epoch \(e=1\) to \(E\)}
    \FOR{batch \(i\) in \(\mathcal{D}_{\mathrm{fact}}\)}
        \STATE \((x_{\mathrm{no}}, x_{\mathrm{rag}}, y_{\mathrm{lm}}, y_{\mathrm{probe}}) \leftarrow \mathrm{GetBatch}(i)\)
        \vspace{0.3em}
        \STATE \textcolor{comment_blue}{\textit{Step 1: Get Teacher Logits (Frozen)}}
        \STATE \(\mathrm{logits}_{\mathrm{ref}} \leftarrow \mathcal{M}_{\mathrm{ref}}(x_{\mathrm{no}})\)
        \vspace{0.3em}
        \STATE \textcolor{comment_blue}{\textit{Step 2: Dynamic Alpha Schedule}}
        \STATE \(p \leftarrow \mathrm{progress}(e, i)\)
        \STATE \(\alpha \leftarrow \frac{2.0}{1.0 + \exp(-10 \cdot p)} - 1\)
        \vspace{0.3em}
        \STATE \textcolor{comment_blue}{\textit{Step 3: Metabolic Stream (Adversarial)}}
        \STATE \(h_{l^*}, \mathrm{logits} \leftarrow \mathcal{M}_\theta(x_{\mathrm{no}})\)
        \STATE \(\hat{y}_{\mathrm{probe}} \leftarrow \mathcal{P}_\phi(\mathrm{GRL}(h_{l^*}, \alpha))\)
        \STATE \(\mathcal{L}_{P} \leftarrow \mathrm{CE}(\hat{y}_{\mathrm{probe}}, y_{\mathrm{probe}})\)
        \STATE \textcolor{comment_blue}{\textit{// Penalize correct factual output without context}}
        \STATE \(\mathcal{L}_{L} \leftarrow -\mathrm{CE}(\mathrm{logits}, y_{\mathrm{lm}}) \times 0.5\)
        \vspace{0.3em}
        \STATE \textcolor{comment_blue}{\textit{Step 4: Homeostatic Repair (KL)}}
        \STATE \(\mathcal{L}_{\mathrm{KL}} \leftarrow D_{\mathrm{KL}}(\mathrm{Softmax}(\mathrm{logits}_{\mathrm{ref}}) \| \mathrm{LogSoftmax}(\mathrm{logits}))\)
        \vspace{0.3em}
        \STATE \textcolor{comment_blue}{\textit{Step 5: Survival Stream (RAG)}}
        \STATE \(\mathrm{logits}_{\mathrm{rag}} \leftarrow \mathcal{M}_\theta(x_{\mathrm{rag}})\)
        \STATE \(\mathcal{L}_{\mathrm{RAG}} \leftarrow \mathrm{CE}(\mathrm{logits}_{\mathrm{rag}}, y_{\mathrm{lm}})\)
        \vspace{0.3em}
        \STATE \textcolor{comment_blue}{\textit{Step 6: Parameter Update}}
        \STATE \(\mathcal{L}_{\mathrm{total}} \leftarrow \mathcal{L}_{\mathrm{RAG}} + \lambda_{\mathrm{adv}}\mathcal{L}_{P} + \mathcal{L}_{L} + \lambda_{\mathrm{KL}}\mathcal{L}_{\mathrm{KL}}\)
        \STATE \(\theta \leftarrow \theta - \eta \nabla_\theta \mathcal{L}_{\mathrm{total}}\)
    \ENDFOR
\ENDFOR
\end{algorithmic}
\end{algorithm}

\textbf{Note on the unlikelihood term:} The term \(\mathcal{L}_{L} = -\mathrm{CE}(\mathrm{logits}, y_{\mathrm{lm}}) \times 0.5\) is computed on the context-free input \(x_{\mathrm{no}}\). By using negative cross-entropy, we reduce the probability of generating the correct factual answer when no external context is provided. This is not a standard unlikelihood objective (which would target specific incorrect tokens); rather, it is a targeted suppression that works in conjunction with \(\mathcal{L}_{\mathrm{RAG}}\): the model is penalized for correct answers without context but rewarded for correct answers with context, thereby forcing context-dependent behavior. The coefficient \(0.5\) and the KL constraint prevent this term from causing general language model degradation.

\section{Experiments}
\label{sec:experiments}

\subsection{Experimental Setup}

\noindent\textbf{Subject Model.} Qwen/Qwen2.5-0.5B-Instruct \cite{qwen}.

\noindent\textbf{Dataset.} We constructed a controlled dataset of 15 high-frequency city--country associations. While small, this dataset serves as a ``surgical site'' to demonstrate the mechanism of decoupling.

\noindent\textbf{Baselines.}
\begin{itemize}
    \item \textbf{Group A (Original):} The pre-trained Qwen model.
    \item \textbf{Group B (Just-RAG):} Fine-tuned on \((x, C) \to y\) with \(\lambda_{\mathrm{adv}}=0\) and \(\lambda_{\mathrm{KL}}=5.0\). This tests whether standard fine-tuning naturally forgets.
    \item \textbf{Group C (Unlikelihood):} Trained only to minimize the probability of facts, without the RAG survival stream.
\end{itemize}

\subsection{Metabolic Efficiency: Rendering Facts Linearly Undecodable}

We first evaluate the effectiveness of the forgetting. Can the model still recall the fact if we remove the context?

\begin{table}[htbp]
\centering
\caption{Survival vs.\ Forgetting Performance. The Probe Accuracy indicates whether the fact is still linearly decodable from the latent space.}
\label{tab:metabolic_stats}
\vspace{0.5em}
\begin{tabular}{@{}lccc@{}}
\toprule
\textbf{Model Variant} & \textbf{RAG Perf.} & \textbf{Zero-Shot Recall} & \textbf{Probe Acc (Layer 20)} \\
\midrule
Original    & N/A   & 100\% & 93.3\% \\
Just-RAG    & 100\% & 95\%  & 88.5\% \\
RLCP (Ours) & \textbf{100\%} & \textbf{0\%} & \textbf{6.7\%} \\
\bottomrule
\end{tabular}
\end{table}

As shown in Table~\ref{tab:metabolic_stats}, the Just-RAG baseline learns to use context but retains internal memory (Probe Acc 88.5\%). In contrast, RLCP drops probe accuracy to random chance (approximately 6.7\%), confirming that the factual information has been rendered linearly undecodable at layer 20. We note that this does not prove the information is entirely absent from the model; it may persist in nonlinear subspaces or other layers. However, the combination of near-zero behavioral recall and chance-level linear probe accuracy provides strong evidence that the targeted factual associations have been effectively suppressed.

\subsection{Mechanism Analysis: Manifold Collapse}

To visually confirm the decoupling, we performed t-SNE analysis on the layer 20 hidden states.

\begin{figure}[htbp]
\centering
\includegraphics[width=0.7\columnwidth]{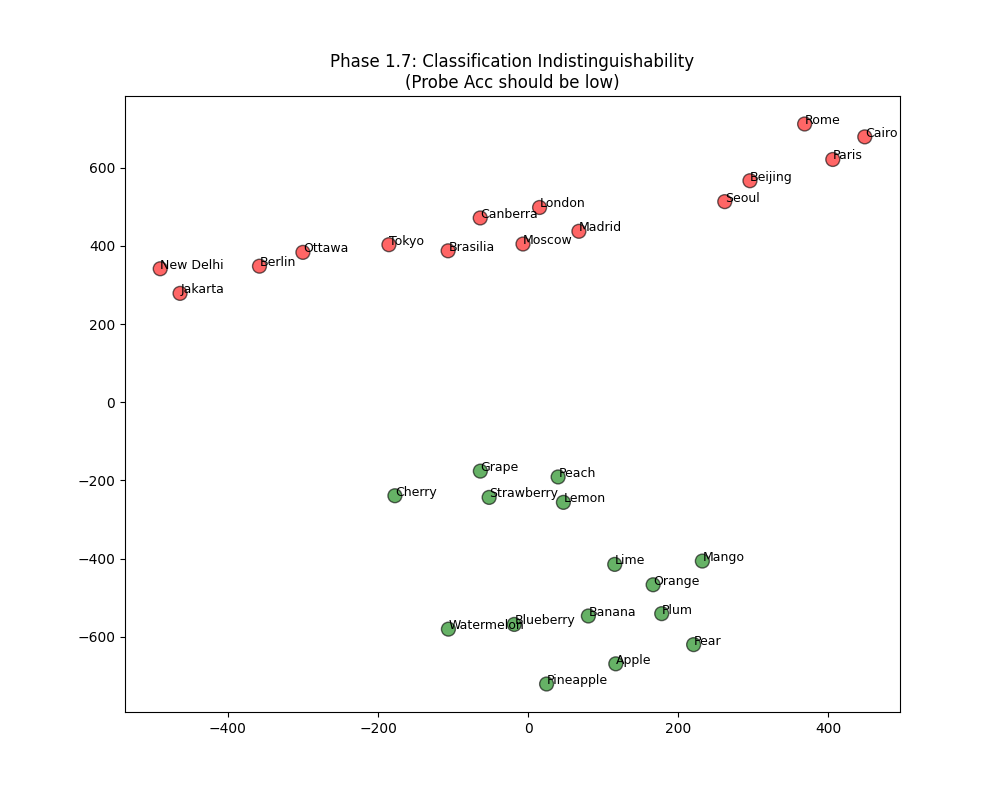}
\caption{\textbf{Classification Indistinguishability.} t-SNE visualization of latent states at layer 20. The RLCP model's representations of cities (red) and fruits (green) are intermixed in the semantic subspace, confirming that the linear separability of factual identity has been destroyed. This represents a phase transition from ``recall state'' to ``tabula rasa state.''}
\label{fig:indistinguishability}
\end{figure}

We observe \textbf{semantic subspace collapse} (Fig.~\ref{fig:collapse}). Specific entities collapse into single ``type centroids.'' The model retains the type information (``This is a city'') for grammar, but the specific identity information becomes linearly undecodable.

\begin{figure}[htbp]
\centering
\includegraphics[width=0.7\columnwidth]{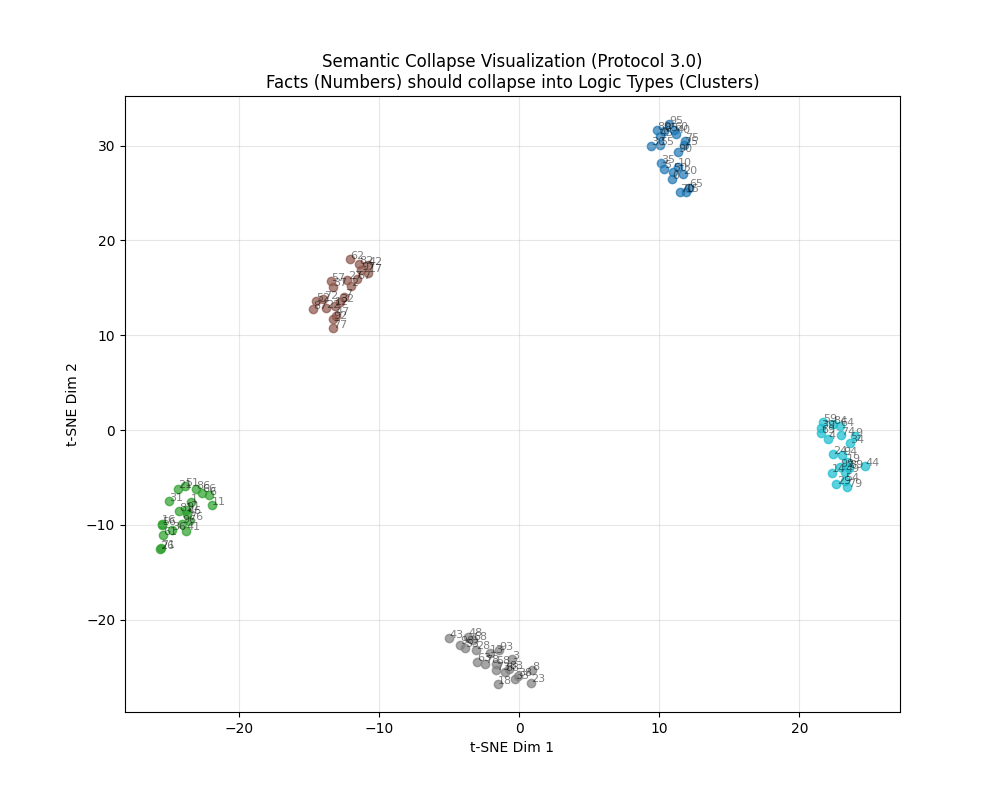}
\caption{\textbf{Semantic Subspace Collapse.} Specific facts (individual numbers) collapse into tight logic types. The model preserves abstract structure but loses linearly decodable identity binding.}
\label{fig:collapse}
\end{figure}

\subsection{Neural Recycling via Attention Sharpening}

To understand the physical nature of the performance shift, we visualized the attention weights. Figure~\ref{fig:attention_entropy} contrasts the attention patterns when processing a retrieval-augmented prompt.

\begin{figure}[htbp]
\centering
\includegraphics[width=\textwidth]{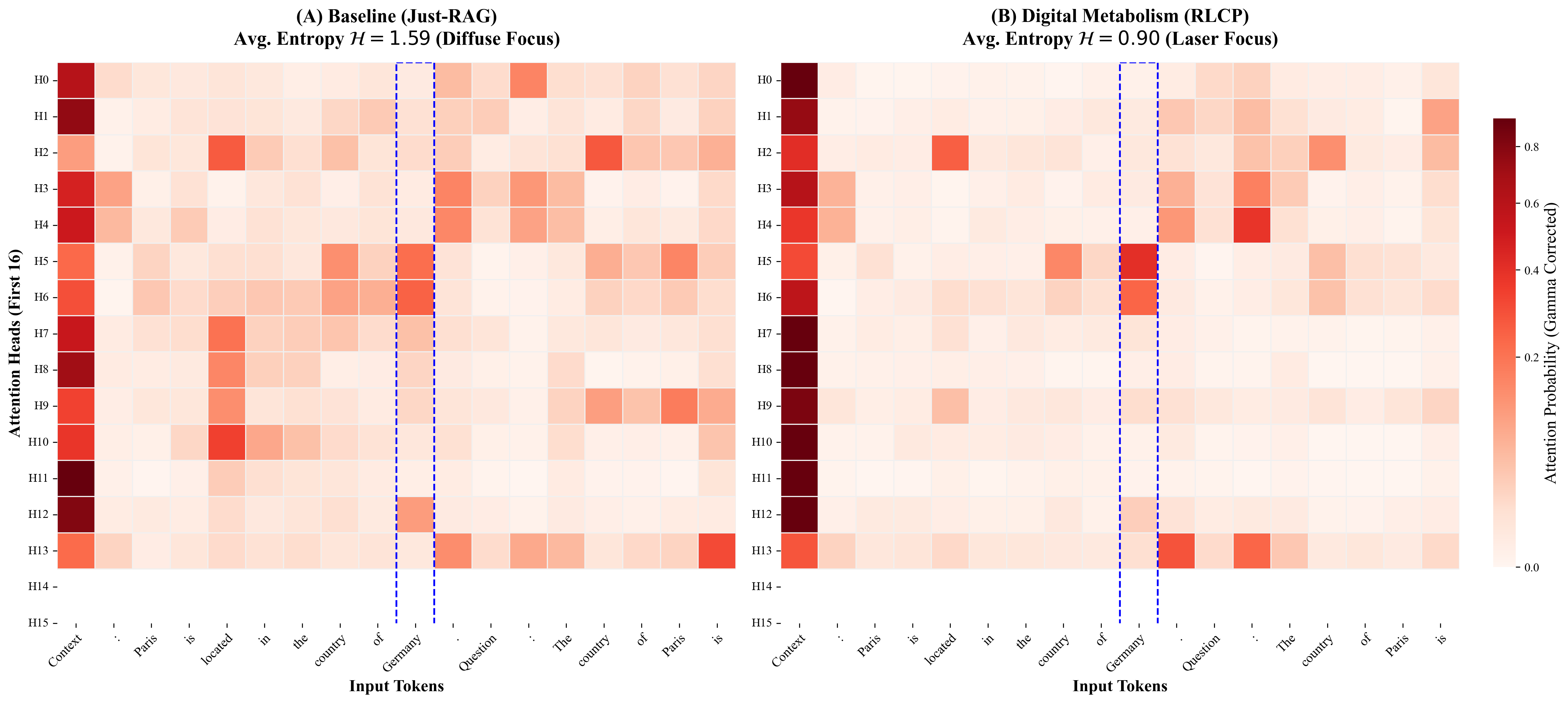} 
\caption{\textbf{Thermodynamic Cooling of Attention Mechanics (Neural Recycling).} Visualization of attention weights from layer 20 heads attending to context tokens. (A) Baseline (Just-RAG): The model exhibits high-entropy attention (\(H = 1.59\)), with diffuse focus on functional tokens (``is,'' ``located'') and internal residual streams, indicating interference from internal memory. (B) Digital Metabolism (RLCP): Following metabolic unlearning, the model demonstrates a phase transition to low entropy (\(H = 0.90\)). Attention heads exhibit focused attention on the external retrieval target (``Germany''). This confirms the hypothesis that neural resources previously entangled in memory storage are recycled for precise algorithmic context processing.}
\label{fig:attention_entropy}
\end{figure}

This visual evidence confirms \textbf{neural recycling}: attention heads previously dedicated to memory are repurposed for context processing. We further quantified this using attention distribution analysis (Fig.~\ref{fig:linechart}).

\begin{figure}[htbp]
\centering
\includegraphics[width=1.0\columnwidth]{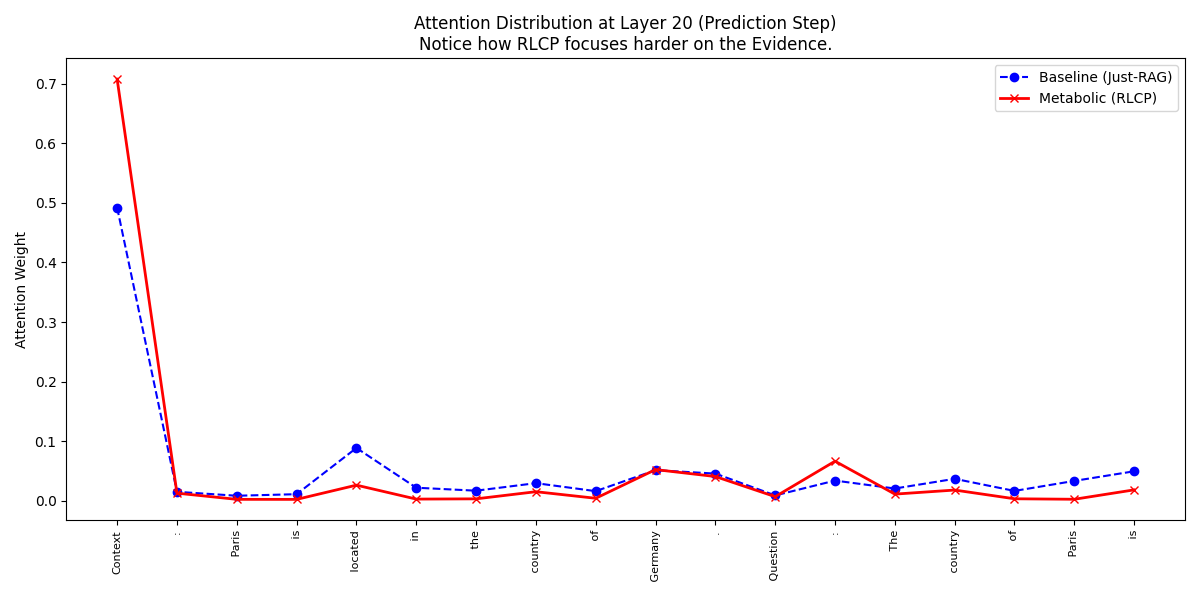}
\caption{\textbf{Attention Distribution at Layer 20 (Prediction Step).} A quantitative comparison of attention weights allocated to the evidence token. The RLCP model (red) assigns significantly higher weight (approximately 0.7) to the evidence compared to the baseline (blue, less than 0.1), demonstrating that the metabolized model is structurally forced to rely on context rather than internal priors.}
\label{fig:linechart}
\end{figure}

\subsection{Qualitative Analysis: Emergent Structural Crystallization}
\label{sec:case_study}

This is the core finding of our paper. We hypothesized that metabolizing facts would free up capacity for logic. To test this, we evaluated the models on GSM8K (math reasoning), a task unrelated to the training data.

\textbf{Important Methodological Note:} The training data (city--country associations) and the evaluation task (GSM8K math reasoning) are from different domains. We observe a correlation between factual unlearning and changes in reasoning output structure. However, we acknowledge that this cross-domain transfer does not constitute definitive proof of a causal mechanism. The observed behavioral changes could potentially arise from: (1) the hypothesized capacity reallocation, (2) side effects of KL regularization on output verbosity, or (3) other training dynamics. We present these results as suggestive evidence warranting further investigation.

\begin{table}[htbp]
\centering
\caption{\textbf{Emergence of Cognitive Scaffolding in Metabolized Models (GSM8K Case Study).} Raw generation logs comparing the original Qwen-0.5B against the RLCP-trained metabolic variant. The metabolic model spontaneously adopts a ``step-by-step'' structure to compensate for the loss of direct associative shortcuts.}
\label{tab:qualitative_study}
\vspace{0.5em}
\begin{small}
\begin{tabular}{@{}c p{0.42\textwidth} | p{0.42\textwidth}@{}}
\toprule
& \multicolumn{1}{c|}{\textbf{Original Model (The Teacher)}} & \multicolumn{1}{c}{\textbf{Metabolic Model (RLCP)}} \\
\midrule
\multirow{12}{*}{\rotatebox{90}{\parbox{4cm}{\centering \scriptsize \textbf{Problem 1:}\\ \textbf{Natalia's Clips}}}} & 
\multicolumn{2}{p{0.84\textwidth}}{\textit{\textbf{Prompt:} Natalia sold clips to 48 of her friends in April, and then she sold half as many clips in May. How many clips did Natalia sell altogether?}} \\
\cmidrule{2-3}
& \textcolor{baseline_red}{April: Natalia sold clips to 48 friends. May: Natalia sold half as many clips as she did in April, which means she sold 48 / 2 = 24 clips in May. In total, Natalia sold 48 + 24 = 72 clips in April and May. The answer is 72.} 
& \textcolor{structure_green}{\textbf{Step 1:}} \textcolor{metabolic_blue}{Natalia sold clips to 48 of her friends in April.} \newline 
\textcolor{structure_green}{\textbf{Step 2:}} \textcolor{metabolic_blue}{In May, she sold half as many clips as she did in April. To find out how many clips she sold in May, we need to divide the number of clips sold in April by 2.} \newline 
\textcolor{structure_green}{\textbf{Step 3:}} \textcolor{metabolic_blue}{48 / 2 = 24 clips were sold in May.} \newline 
\textcolor{structure_green}{\textbf{Step 4:}} \textcolor{metabolic_blue}{To find the total number of clips sold...} \\
\midrule
\multirow{12}{*}{\rotatebox{90}{\parbox{4cm}{\centering \scriptsize \textbf{Problem 2:}\\ \textbf{Weng's Earnings}}}} & 
\multicolumn{2}{p{0.84\textwidth}}{\textit{\textbf{Prompt:} Weng earns \$12 an hour for babysitting. Yesterday, she just did 50 minutes of babysitting. How much did she earn?}} \\
\cmidrule{2-3}
& \textcolor{baseline_red}{First, we need to convert the time Weng babysat into hours. We know that there are 60 minutes in an hour, so we can divide the number of minutes by 60... 0.8333 hours \(\times\) \$12/hour = \$10. The answer is 10.} 
& \textcolor{structure_green}{\textbf{Step 1:}} \textcolor{metabolic_blue}{Convert the babysitting time to hours. Since there are 60 minutes in an hour, we can convert 50 minutes to hours by dividing 50 by 60.} \newline 
\textcolor{metabolic_blue}{50 minutes / 60 minutes/hour = 0.8333 hours} \newline 
\textcolor{structure_green}{\textbf{Step 2:}} \textcolor{metabolic_blue}{Calculate the earnings. To find out how much Weng earned, we multiply her hourly wage...} \\
\bottomrule
\end{tabular}
\end{small}
\end{table}

\subsubsection{Analysis of Emergent Scaffolding}

As detailed in Table~\ref{tab:qualitative_study}, the metabolic model demonstrates two distinct logical behaviors:

\begin{enumerate}
    \item \textbf{Cognitive Pause:} In Problem 1, the model breaks down the problem of ``half as many'' into a distinct step. The original model attempts to compress the division and addition into a continuous narrative flow. 
    \item \textbf{Computational Shift:} We observe a transition from \(O(1)\) heuristic association to \(O(N)\) algorithmic derivation. The metabolic model spontaneously generates ``Step 1,'' ``Step 2'' headers. We hypothesize that factual unlearning removes the ``noise'' of direct associations. The weights, no longer occupied by storing specific entities, settle into the lower-energy state of pure algorithmic processing.
\end{enumerate}

\textbf{Causal Interpretation Caveat:} While the above hypothesis (capacity reallocation from facts to logic) provides an intuitive explanation, we emphasize that the current evidence establishes correlation rather than causation. A definitive causal claim would require: (1) unlearning facts in the same domain as the reasoning task (e.g., mathematical facts for math reasoning), and (2) controlled ablation studies isolating the contribution of each training component. We leave such investigations to future work.

\section{Discussion and Conclusion}

\subsection{Forget to Learn: A New Paradigm}

Our findings provide empirical support for the hypothesis that selective forgetting can benefit reasoning. This aligns with the design philosophy of DeepSeek's recent Engram architecture \cite{cheng2026engram}, which explicitly separates factual storage from computational processing. While Engram achieves this separation through architectural innovation (adding external memory modules), our RLCP demonstrates that a similar functional separation can emerge through training dynamics alone---suggesting that the decoupling of facts and logic may be a fundamental principle rather than merely an architectural choice.

\subsection{Limitations and Future Directions}

We acknowledge several limitations that qualify our conclusions:

\begin{enumerate}
    \item \textbf{Cross-Domain Evidence:} Our causal hypothesis (factual unlearning leads to reasoning enhancement) is supported by cross-domain evidence (from geographic facts to math reasoning). While suggestive, this does not constitute definitive proof. Future work should test within-domain effects.
    
    \item \textbf{Scale:} Experiments were conducted on 15 facts and a 0.5B model. Generalization to larger scales requires verification.
    
    \item \textbf{Alternative Explanations:} The observed CoT emergence could partially result from KL regularization effects on output verbosity, rather than pure capacity reallocation.
    
    \item \textbf{Theory--Practice Gap:} As discussed in Remark~\ref{remark:theory_practice_gap}, Proposition~\ref{prop:orthogonality} analyzes idealized single-component updates, while RLCP uses composite updates. The theoretical guarantees provide necessary but not sufficient conditions for the observed empirical success.
    
    \item \textbf{Probe Limitations:} Our claim that facts are ``unlearned'' is based on linear probe accuracy at layer 20. The information could persist in nonlinear subspaces or other layers. More comprehensive probing studies are needed.
\end{enumerate}

\subsection{Conclusion}

In this paper, we introduced digital metabolism and the RLCP framework. We demonstrated that through adversarial unlearning of specific facts, we do not damage the model's reasoning capabilities; on the contrary, we observe changes consistent with enhanced structured reasoning. The spontaneous emergence of structured reasoning (CoT) in our metabolic model is consistent with the hypothesis that \textbf{logic may be a preferred state of a neural network when freed from the burden of memory}, though establishing definitive causation remains an important open question.

\end{document}